\newcommand{\blind}{0}
\newcommand{\bd}{\boldsymbol}
\newcommand{\mb}{\mathbf}
\newcommand{\be}{\begin{equation}}
\newcommand{\ee}{\end{equation}}
\DeclareMathOperator*{\argmax}{arg\,max}
\let\Re\undefined
\let\Im\undefined
\DeclareMathOperator{\Re}{\mathfrak{R}}
\DeclareMathOperator{\Im}{\mathfrak{I}}
\DeclareMathOperator{\diag}{diag}
\DeclareMathOperator{\rank}{rank}
\DeclareMathOperator{\cov}{cov}
\DeclareMathOperator{\corr}{corr}
\DeclareMathOperator{\var}{var}
\DeclareMathOperator{\tr}{tr}
\DeclareMathOperator{\lspan}{span}
\DeclareMathOperator{\PVE}{PVE}
\newtheorem{theorem}{Theorem}
\newtheorem{remark}{Remark}
\newtheorem{prop}{Proposition}
\begin{document}

\def\spacingset#1{\renewcommand{\baselinestretch}%
{#1}\small\normalsize} \spacingset{1}


\if0\blind
{
  \title{\bf D-CDLF: Decomposition of Common and Distinctive Latent Factors for Multi-view High-dimensional Data}
	\author{Hai Shu 
\hspace{.2cm}\\
\\
		Department of Biostatistics, New York University\\
		\\
		Email:  hs120@nyu.edu
		}
  \maketitle
} \fi

\if1\blind
{
  \bigskip
  \bigskip
  \bigskip
  \begin{center}
    {\LARGE\bf Title}
\end{center}
  \medskip
} \fi

\bigskip
\begin{abstract}
A typical approach to the joint analysis of multiple high-dimensional data views is to decompose each view's data matrix into three parts: a low-rank common-source matrix generated by common latent factors of all data views, a low-rank distinctive-source matrix generated by distinctive latent factors of the corresponding data view, and an additive noise matrix. Existing decomposition methods often focus on the uncorrelatedness between the common latent factors and distinctive latent factors, but inadequately address the equally necessary uncorrelatedness between distinctive latent factors from different data views. We propose a novel decomposition method, called Decomposition of Common and Distinctive Latent Factors (D-CDLF), to effectively achieve both types of uncorrelatedness for two-view data. We also discuss the estimation of the D-CDLF under high-dimensional settings.

\end{abstract}

\noindent%
{\it Keywords:} Canonical correlation analysis; Common latent factor; Data integration; Distinctive latent factor; Orthogonality constraint.
\vfill

\newpage
\spacingset{1.9}

\section{Introduction}
\label{s:intro}

Let $\bd{y}_{k,i}\in\mathbb{R}^{p_k}$ ($1\le k\le K$, $1\le i\le n$) be the $k$-th data view of the $i$-th subject with $p_k$ observable variables
(e.g., $p_1$ brain nodes in FDG-PET data for the first view, and $p_2$ SNPs in genotyping data for the second view).  
Assume that $\{\bd{y}_{k,i}\}_{i=1}^n$ are $n$ independent and identically distributed (i.i.d.) observations of a random vector $\bd{y}_k$.
A typical model for multi-view high-dimensional data conducts the  decomposition: 
\vspace{-0.1cm}
\be\label{eqn: y=c+d+e}
\bd{y}_k=\bd{x}_k+\bd{e}_k
=\bd{c}_k+\bd{d}_k+\bd{e}_k
=\mb{B}_{k,c}(c^{(1)},\dots,c^{(L_c)})^\top+\mb{B}_{k,d}
(d_k^{(1)},\dots,d_k^{(L_k)})^\top+\bd{e}_k,
\vspace{-0.1cm}
\ee
for $k=1,\dots, K$, where $\bd{x}_k$ is the signal, 
an approximation of $\bd{y}_k$,
assumed to be
generated by a small number of latent factors
to 
avoid the curse of high dimensionality~\citep{Yin88}, $\bd{e}_k$ is the residual noise, 
$\bd{c}_k$ and $\bd{d}_k$ are the common-source 
and distinctive-source parts of $\bd{x}_k$, respectively,
generated by
the {\it common latent factors (CLFs)} $\{c^{(\ell)}\}_{\ell=1}^{L_c}$ of $\{\bd{x}_k\}_{k=1}^K$
and  the {\it distinctive latent factors (DLFs)} $\{d_k^{(\ell)}\}_{\ell=1}^{L_k}$ of $\bd{x}_k$,  
and 
$\{\mb{B}_{k,c},\mb{B}_{k,d}\}$ are coefficient matrices.
As the focus is on data variation, 
all random variables in~\eqref{eqn: y=c+d+e} are assumed to be mean-zero.
{For biomedical data, 
the common and distinctive latent factors
(CDLFs)
 can  
be viewed as the common and distinctive biological mechanisms
underlying multi-view data, 
manifested through their {\it concrete representations}, 
common- and distinctive-source signals $\bd{c}_k$ and $\bd{d}_k$, within the original data domain of the $k$-th data~view.}

Two main issues exist in previous work~\citep{Lofs11,schouteden2013sca,Zhou16, Lock13, Feng18,OCon16,gaynanova2019structural, Shu18,Shu2022d}:
(i)
Insufficient consideration has been given to 
the uncorrelatedness of CDLFs: $\{c^{(\ell)}\}_{\ell=1}^{L_c}\perp \{d_k^{(\ell)}\}_{\ell=1}^{L_k}\perp\{d_{k'}^{(\ell)}\}_{\ell=1}^{L_{k'}}$, $k\ne k'$.
 This property  ensures complete separation of CDLFs.
If a CLF and a DLF are correlated,  
there will be a CLF between them.
For example,
if $\corr(c^{(1)},d_1^{(1)})\ne 0$,
then $d_1^{(1)}$ can be viewed as
a CLF of $c^{(1)}$ and $d_1^{(1)}$,
because
$c^{(1)}=\cov(c^{(1)},d_1^{(1)})d_1^{(1)}/\var(d_1^{(1)})+\epsilon$
with $\epsilon\perp d_1^{(1)}$.
Similarly, two correlated DLFs
from different~data views will have a CLF.
Most methods ~\citep{Lofs11,schouteden2013sca,Zhou16, Lock13, Feng18} 
focus on
the uncorrelatedness between CLFs and DLFs,
but ignore the uncorrelatedness between DLFs from different data views.
 \citet{Shu18,Shu2022d} emphasizes the uncorrelatedness between all DLFs but at the cost of losing the uncorrelatedness between CLFs and DLFs.
Though some  attempts have been made to achieve both types of uncorrelatedness, they either sacrifice some signal
as noise~\citep{OCon16} or offer an asymmetrical decomposition  for identically distributed signals~\citep{gaynanova2019structural}.
(ii) There is a lack of tools that are adaptive to multi-view data to explain the relationship between CDLFs and original variables.

To address the above two issues,
we propose a novel method,  Decomposition of Common and Distinctive Latent Factors (D-CDLF), for $K=2$ data views.
The proposed D-CDLF is the first of its kind to achieve the desirable uncorrelatedness
of CDLFs within and between the two data views.
Additionally, the D-CDLF is accompanied by  two new types of Proportions of Variance Explained (PVEs),
the variable-level PVEs and the view-level PVEs, to measure the joint effects of all CLFs and those of all DLFs on  original variables.

The rest of this paper is organized as follows.
Section~\ref{sec: Preliminaries} introduces some useful notation and the canonical correlation analysis  \citep[CCA;][]{Hote36}
as preliminaries.
Section~\ref{sec: 2-view D-CDLF} proposes the two-view D-CDLF and the variable-level and view-level PVEs.
Section~\ref{sec: estimation} discusses the estimation of 
two-view D-CDLF under high-dimensional settings.
All theoretical proofs are deferred to Section~\ref{sec: proofs}.

\section{Preliminaries}\label{sec: Preliminaries}
\subsection{Notation}
We  introduce some useful notation.
Define $[n]=\{1,\dots, n\}$ for any positive integer $n$.
For a real matrix $\mb{M}=(M_{ij})_{1\le i\le p,1\le j\le n}$, 
the $\ell$-th largest singular value is denoted by $\sigma_\ell(\mb{M})$, and
the $\ell$-th largest eigenvalue when $p=n$
is $\lambda_{\ell}(\mb{M})$.
Denote $\mb{M}^{[s:t,u:v]}$, $\mb{M}^{[s:t,:]}$,  and $\mb{M}^{[:,u:v]}$
as the submatrices 
$(M_{ij})_{s\le i\le t, u\le j\le v}$, $(M_{ij})_{s\le i\le t,1\le j\le n}$, 
and $(M_{ij})_{1\le i\le p, u\le j\le v}$
of $\mb{M}$, respectively.
We write the $j$-th entry of a vector $\bd{v}$ by $\bd{v}^{[j]}$, and $\bd{v}^{[s:t]}=(\bd{v}^{[s]},\bd{v}^{[s+1]},\dots,\bd{v}^{[t]})^\top$.
For matrices $\mb{M}_1,\dots , \mb{M}_N$ of appropriate dimensions,
denote $[\mb{M}_1;\dots;\mb{M}_N]=(\mb{M}_1^\top,\dots,\mb{M}_N^\top)^\top$ to be their row-wise concatenation,
define $[\mb{M}_\ell]_{\ell=a}^b=[\mb{M}_a;\mb{M}_{a+1};\dots; \mb{M}_b]$, which is an empty matrix if $a>b$,
and
define
$[\mb{M}_\ell]_{\ell\in\mathcal{I}}
=[\mb{M}_{i_1};\dots;\mb{M}_{i_p}]$
for an index set $\mathcal{I}=\{i_1,\dots,i_p\}$.
Similarly, define $(\mb{M}_\ell)_{\ell=a}^b$
and $(\mb{M}_\ell)_{\ell\in\mathcal{I}}$
for column-wise concatenations. 
For a set $S=\{s_1,\dots,s_p\}$, 
define $[S]=[s_1;\dots;s_p]$
as the vector form of $S$.
By default, we assume that the elements on the main diagonal of the (rectangular) diagonal matrix in the singular value decomposition (SVD) of a given real matrix are arranged in descending order.

Assume that all random variables are defined on a  probability space $(\Omega, \mathcal{F},P)$.
The $\mathcal{L}^2$ space of all $\mathbb{S}$-valued random variables 
on $(\Omega, \mathcal{F},P)$
 is 
 $\mathcal{L}^2(\Omega, \mathcal{F},P;\mathbb{S})
 =\{
 x:\Omega\to \mathbb{S}\big| \int_\Omega |x(\omega)|^2  P(d\omega)<\infty
\}
 $, with $\mathbb{S}\in \{\mathbb{R},\mathbb{C}\}$.
 For  complex random variables $x$ and $y$,
define the expectation of $x$ by  $E[x]=E[\Re(x)]+\mathbf{i}E[\Im(x)]$,
the covariance of $x$ and $y$ by
$\cov(x,y)=E[(x-E[x])(y-E[y])^*]$, and
their correlation by $\corr(x,y)=\cov(x,y)/\sqrt{\var(x)\var(y)}$
if $\var(x):=\cov(x,x)\ne 0$ and $\var(y)\ne 0$, and otherwise $\corr(x,y)=0$,
where $\Re(x)$ and $\Im(y)$ are the real and imaginary parts of $x$, respectively, and $y^*$ is the complex conjugate of $y$.
By default, 
 we use the inner product of $x$ and $y$ defined by
$\langle x,y \rangle=E[xy^*]$,
and its induced norm of $x$ is $\| x\|=\sqrt{\langle x,x \rangle}$.
With $\langle \cdot,\cdot \rangle$ and $\| \cdot\|$,
the  space
$\mathcal{L}^2(\Omega, \mathcal{F},P;\mathbb{S})$ is a Hilbert space
for $\mathbb{S}\in \{\mathbb{R},\mathbb{C}\}$,
in which the notation $x=y$ means  $P(x=y)=1$ \citep{Shiryaev1996}.

Let $\mathcal{L}_0^2$ be the subspace of all mean-zero real random variables in $\mathcal{L}^2(\Omega, \mathcal{F},P;\mathbb{R})$,
for which the above defined $\langle \cdot,\cdot\rangle$ equals $\cov(\cdot,\cdot )$.
Denote by $(\mathcal{L}_0^2,\cov)$ the inner product space of $\mathcal{L}_0^2$ with $\cov(\cdot, \cdot)$ as its inner product.
The space $(\mathcal{L}_0^2,\cov)$ is also a Hilbert space,
in which $\cos\{\theta(\cdot,\cdot)\}=\corr(\cdot,\cdot)$ and $\|\cdot\|=\sqrt{\var(\cdot)}$.
Note that orthogonality in $(\mathcal{L}_0^2,\cov)$ is equivalent to uncorrelatedness.
Thus, we use the two terms interchangeably in $(\mathcal{L}_0^2,\cov)$.

For a set $\{v_j\}_{j=1}^p$, 
we denote its linear span over $\mathbb{R}$ by
 $\lspan(\{v_j\}_{j=1}^p)=\{\sum_{j=1}^p a_jv_j| a_j\in \mathbb{R}\}$,  and sometimes write it as $\lspan_{\mathbb{R}}(\{v_j\}_{j=1}^p)$ to emphasize $\{a_j\}_{j=1}^p\subseteq \mathbb{R}$.
For a vector $\bd{v}=[v_j]_{j=1}^p$,
write $\lspan(\bd{v}^\top)=\lspan(\{v_j\}_{j=1}^p)$.
For $\bd{x}_k$ ($k=1,2$) in~\eqref{eqn: y=c+d+e} with entries in $(\mathcal{L}_0^2,\cov)$,
define 
$r_c=\rank(\cov(\bd{x}_1,\bd{x}_2))$ and $r_k=\rank(\cov(\bd{x}_k))$.
We have $r_k=\dim(\lspan(\bd{x}_k^\top))$.

\subsection{Canonical correlation analysis}\label{subsec: CCA}
The CCA method \citep{Hote36} sequentially finds the most correlated variables, called {\it canonical variables}, between the two subspaces 
$\{\lspan(\bd{x}_k^\top)\}_{k=1}^2$ in $(\mathcal{L}_0^2,\cov)$. For $1\le \ell\le  r_c$, the $\ell$-th pair of  canonical variables are defined as
\be
\begin{split}\label{CCA def}
	&
	\{z_1^{(\ell)},z_2^{(\ell)}\}	\in \argmax_{\{z_k\}_{k=1}^2}\ \corr(z_1,z_2)\quad \text{subject to}\\
	&\quad \var(z_k)=1 \ \text{and}\ z_k\in	\lspan(\bd{x}_k^\top)\setminus\lspan(\{z_k^{(m)}\}_{m=1}^{\ell-1}),
\end{split}
\ee
where $\lspan(\bd{x}_k^\top)\setminus\lspan(\{z_k^{(m)}\}_{m=1}^0):=\lspan(\bd{x}_k^\top)$,
and 
for $\ell>1$,
$\lspan(\bd{x}_k^\top)\setminus\lspan(\{z_k^{(m)}\}_{m=1}^{\ell-1})$
denotes
the 
orthogonal complement of $\lspan(\{z_k^{(m)}\}_{m=1}^{\ell-1})$ in $\lspan(\bd{x}_k^\top)$.
The correlation $\rho^{(\ell)}:=\corr(z_1^{(\ell)},z_2^{(\ell)})$  is called the $\ell$-th {\it canonical correlation} of $\bd{x}_1$ and $\bd{x}_2$.
Augment 
$\{z_k^{(\ell)}\}_{\ell=1}^{r_c}$ with any $(r_k-r_c)$ standardized real random variables to be $\bd{z}_k=[z_k^{(\ell)}]_{\ell=1}^{r_k}$
such that its entries form an orthonormal basis of $\lspan(\bd{x}_k^\top)$.
We have the bi-orthogonality \citep{Shu18}: 
\be\label{bi-orthogonality}
\cov(\bd{z}_1,\bd{z}_2)=
\begin{bmatrix}
\diag(\rho^{(1)},\dots,\rho^{(r_c)})&\mb{0}_{r_c\times (r_2-r_c)}\\ \mb{0}_{(r_1-r_c)\times r_c}&\mb{0}_{(r_1-r_c)\times (r_2-r_c)}
\end{bmatrix}.
\ee
The augmented canonical variables (ACVs) $\{\bd{z}_k\}_{k=1}^2$ can be obtained by 
$
\bd{z}_k=\mb{U}_{\theta, k}^\top \bd{h}_k,
$
where $\bd{h}_k=\mb{\Lambda}_k^{-1/2}\mb{V}_k^\top \bd{x}_k$,
$\mb{V}_k\mb{\Lambda}_k\mb{V}_k^\top$ is the compact 
SVD of $\cov(\bd{x}_k)$,
and  $ \mb{U}_{\theta, 1} \mb{\Lambda}_{\theta} \mb{U}_{\theta, 2}^\top
$ is the full SVD of $\mb{\Theta}:=\cov(\bd{h}_1,\bd{h}_2)$ with $\mb{\Lambda}_{\theta}=\cov(\bd{z}_1,\bd{z}_2)$ given in \eqref{bi-orthogonality}.

\section{Two-view D-CDLF ($K=2$)}\label{sec: 2-view D-CDLF}
We begin with the decomposition of two standardized real random variables, and then extend it to any two real random vectors.
Following this, we introduce our proposed variable-level and view-level PVEs.

\subsection{Decomposition of two standardized real random variables}
Let $z_1$ and $z_2$ be two standardized real random variables with correlation $\rho\in [0,1]$. We aim to decompose them by
\be\label{decomp simple case}
z_k=c+d_k~~\text{for}~~k=1,2,
\ee
with a common variable $c$ and two distinctive variables $d_1$ and $d_2$ in $(\mathcal{L}_0^2,\cov)$ subject to 
\be\label{tri-orthogonal}
c \perp \{d_1,d_2\}~~\text{and}~~d_1\perp d_2.
\ee
When $\rho\in (0,1)$, the tri-orthogonality constraint \eqref{tri-orthogonal} implies that $\lspan(\{c, d_1,d_2\})$ is three-dimensional 
and larger than $\lspan(\{z_1,z_2\})$. We thus need to expand our perspective on the decomposition
from $\lspan(\{z_1,z_2\})$ to a slightly larger space 
allowing the tri-orthogonality, for example, $\lspan(\{z_1,z_2,z_{\Im}\})$, with an auxiliary variable $z_{\Im}$ that
can be any standardized real random variable 
satisfying $z_{\Im}\perp \{z_1,z_2\}$.
The solutions to the decomposition in $\lspan(\{z_1,z_2,z_{\Im}\})$
are given in the following proposition.

\begin{prop}\label{prop: uniqueness of c}
Let $z_1,z_2,z_{\Im}\in (\mathcal{L}_0^2,\cov)$ be three standardized random variables with $\corr(z_1,z_2)=\rho\in [0,1]$ and $z_{\Im}\perp \{z_1,z_2\}$.
 Decomposition \eqref{decomp simple case}
in $\lspan(\{z_1,z_2,z_{\Im}\})$ with constraint \eqref{tri-orthogonal} only has the 
solutions 
$
c
=(z_1+z_2)\frac{\rho}{1+\rho}\pm z_{\Im}\sqrt{\frac{\rho(1-\rho)}{1+\rho}}.
$
Moreover, $\var(c)=\rho$ and $\var(d_1)=\var(d_2)=1-\rho$.
\end{prop}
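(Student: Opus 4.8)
The plan is to reduce the decomposition to a finite-dimensional linear system inside the Hilbert space $(\mathcal{L}_0^2,\cov)$ by writing every unknown in coordinates relative to $\{z_1,z_2,z_{\Im}\}$. Since $c\in\lspan(\{z_1,z_2,z_{\Im}\})$, I would write $c=a_1 z_1+a_2 z_2+a_3 z_{\Im}$ with $a_1,a_2,a_3\in\mathbb{R}$, after which $d_k=z_k-c$ is forced by \eqref{decomp simple case}. The Gram matrix of $(z_1,z_2,z_{\Im})$ under $\cov(\cdot,\cdot)$ is pinned down by the hypotheses: unit diagonal, $\cov(z_1,z_2)=\rho$, and $z_{\Im}$ orthogonal to the other two. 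This turns the whole problem into algebra in $a_1,a_2,a_3$.

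Next I would translate the tri-orthogonality \eqref{tri-orthogonal} into scalar equations. The key simplification is that, because $d_k=z_k-c$, the condition $c\perp d_k$ is equivalent to $\cov(c,z_k)=\var(c)$ for $k=1,2$. Expanding $\cov(d_1,d_2)=\cov(z_1-c,z_2-c)$ and substituting these two identities collapses $d_1\perp d_2$ to $\var(c)=\rho$. This already yields the ``moreover'' claim: from $\cov(c,z_k)=\var(c)=\rho$ one gets $\var(d_k)=\var(z_k)-2\cov(z_k,c)+\var(c)=1-\rho$. Hence the full constraint set is equivalent to the three scalar equations $\cov(c,z_1)=\cov(c,z_2)=\var(c)=\rho$, and solving the original problem is the same as solving this system.

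I would then solve it. Writing $\cov(c,z_1)=a_1+a_2\rho$ and $\cov(c,z_2)=a_1\rho+a_2$, their equality forces $a_1(1-\rho)=a_2(1-\rho)$. For $\rho<1$ this gives $a_1=a_2=:a$, and $\cov(c,z_1)=\rho$ yields $a(1+\rho)=\rho$, i.e.\ $a=\rho/(1+\rho)$. Substituting into $\var(c)=a_1^2+a_2^2+a_3^2+2a_1a_2\rho=\rho$ leaves a single quadratic, $a_3^2=\rho(1-\rho)/(1+\rho)$, whose two roots produce the stated $\pm$ and therefore exactly the two candidate variables $c$. Since the system we solved is equivalent to \eqref{tri-orthogonal}, these candidates are both genuine solutions and the only ones, giving existence and uniqueness at once.

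I expect the delicate point to be the boundary behaviour and the exact sense in which uniqueness holds, rather than the core linear algebra. Because $z_{\Im}\perp\{z_1,z_2\}$ has unit variance, $\{z_1,z_2,z_{\Im}\}$ is linearly independent whenever $\rho<1$, so the coordinate representation is unique and the two coefficient solutions correspond to two distinct random variables $c$. When $\rho=1$ we have $z_1=z_2$ in $(\mathcal{L}_0^2,\cov)$, the span collapses to two dimensions, and the step $a_1(1-\rho)=a_2(1-\rho)$ no longer forces $a_1=a_2$; I would treat this endpoint directly, verifying the constraints force $c=z_1$, which is exactly what the displayed formula returns since the coefficient $\sqrt{\rho(1-\rho)/(1+\rho)}$ of $z_{\Im}$ vanishes there. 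Reconciling the degenerate endpoints with the general formula is the main thing requiring care; I would also record that $\rho(1-\rho)\ge 0$ on $[0,1]$ keeps the square root real, so a solution always exists.
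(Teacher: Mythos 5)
Your proposal is correct and follows essentially the same route as the paper's proof: express $c$ in coordinates over $\lspan(\{z_1,z_2,z_{\Im}\})$, translate the tri-orthogonality into the scalar system $\cov(c,z_1)=\cov(c,z_2)=\var(c)=\rho$, and solve, treating the endpoints $\rho\in\{0,1\}$ separately. Your parametrization $c=a_1z_1+a_2z_2+a_3z_{\Im}$ (rather than the paper's $c=\alpha w$ with $\var(w)=1$) makes the deduction $a_1=a_2$ for $\rho<1$ slightly more direct, but the argument is the same in substance.
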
	

Since $-z_{\Im}$ is also orthogonal to $z_1$ and $z_2$,
for simplicity we let 
 \be\label{c solution}
c
=(z_1+z_2)\frac{\rho}{1+\rho}+ z_{\Im}\sqrt{\frac{\rho(1-\rho)}{1+\rho}}
=:c_{\Re}+c_{\Im}.
\ee
We call $c_{\Re}$ the real part of $c$ and call $c_{\Im}$ the imaginary part 
of $c$.
Similarly, 
for $k\in\{1,2\}$,
$d_{\Re,k}=z_k-c_{\Re}$
and $d_{\Im,k}=-c_{\Im}$ are called
the real and imaginary parts of $d_k$, respectively.

From Proposition~\ref{prop: uniqueness of c},
the proportions of the variance of standardized variable $z_k$ explained by $c$ and $d_k$ are $\var(c)=\rho$ and $\var(d_k)=1-\rho$, respectively,
which well link the contribution roles of $c$ and $d_k$
in generating $z_1$ and $z_2$
with their correlation $\rho$.
In particular, 
$c=z_1=z_2$ and $d_1=d_2=0$ if $\rho=1$, and $c=0$ and $d_k=z_k$ if $\rho=0$.
Figure~\ref{f:D-CDLF for 2v} illustrates the decomposition changing with the correlation $\rho$.

\begin{figure}
	\centerline{\includegraphics[width=1\textwidth]{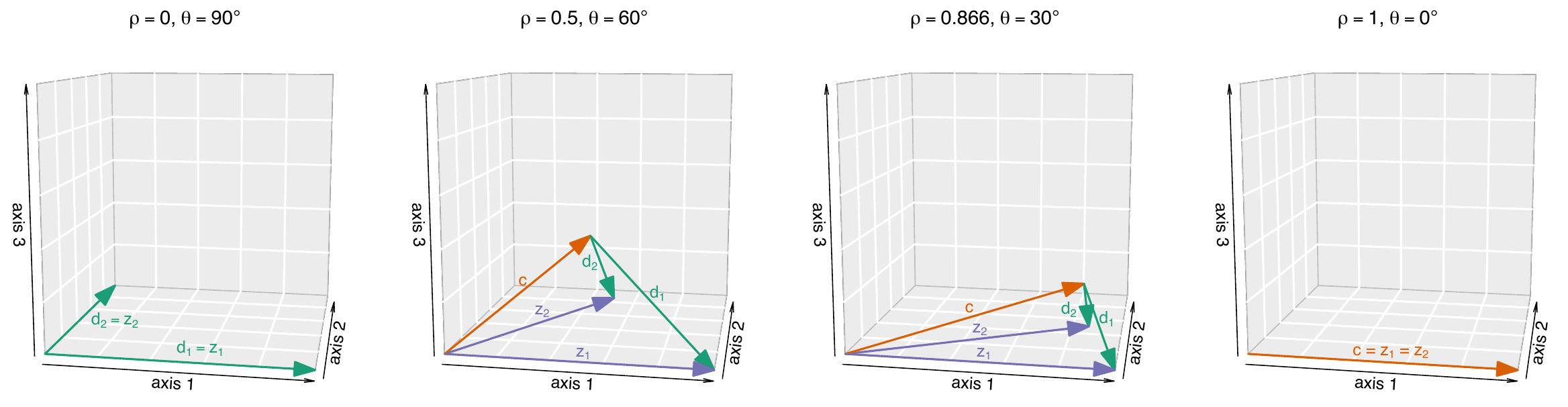}}
	\caption{The D-CDLF decomposition for two standardized real random variables $z_1$ and $z_2$ with different values of their correlation $\rho$ and angle $\theta=\arccos \rho$.}
	\label{f:D-CDLF for 2v}
\end{figure}

\begin{remark}
Since $z_{\Im}$ only serves as an auxiliary role to form the three-dimensional space, we may write $c$ in an equivalent way 
as a complex random variable 
with $z_{\Im}$ put on the imaginary part of $c$. That is,
$
c
=(z_1+z_2)\frac{\rho}{1+\rho}\pm \mathbf{i} z_{\Im}\sqrt{\frac{\rho(1-\rho)}{1+\rho}}.
$
Then the tri-orthogonality \eqref{tri-orthogonal} holds in the space $\lspan_{\mathbb{R}}(\{z_1,z_2,\mathbf{i}z_{\Im}\})$,
where orthogonality  is also equivalent to uncorrelatedness.
\end{remark}

\subsection{Decomposition of  two real random vectors}

We aim to extend the two-variable decomposition in previous subsection to any two real random vectors $\{\bd{x}_k\}_{k=1}^2$  in $(\mathcal{L}_0^2,\cov)$. 
Recall that the ACVs $(z_k^{(\ell)})_{\ell=1}^{r_k}=\bd{z}_k^\top $ of CCA given in Section~\ref{subsec: CCA} form an orthonormal basis of $\lspan(\bd{x}_k^\top)$.
We can thus write $\bd{x}_k$ as a linear combination of these ACVs and then apply the two-variable decomposition to each paired ACVs.
Specifically, we have
\be\label{tb: x=c+d}
\bd{x}_k=\sum_{\ell=1}^{r_k}\bd{\beta}_k^{(\ell)}z_k^{(\ell)}
=\sum_{\ell=1}^{r_k}\bd{\beta}_k^{(\ell)}(c^{(\ell)}+d_k^{(\ell)})
=\sum_{\ell=1}^{r_c}\bd{\beta}_k^{(\ell)}c^{(\ell)}+\sum_{\ell=1}^{r_k}\bd{\beta}_k^{(\ell)}d_k^{(\ell)},
\ee
where $z_k^{(\ell)}=c^{(\ell)}+d_k^{(\ell)}$,  $c^{(\ell)}=c_{\Re}^{(\ell)}+c_{\Im}^{(\ell)}$ is given in \eqref{c solution} with $\{z_1,z_2,z_{\Im},\rho\}$ replaced by $\{z_1^{(\ell)},z_2^{(\ell)},z_{\Im}^{(\ell)},\rho^{(\ell)}\}$ for $\ell\le r_1\wedge r_2$ and $c^{(\ell)}=0$ for $\ell>r_1\wedge r_2$,
and $\mb{B}_k:=(\bd{\beta}_k^{(\ell)})_{\ell=1}^{r_k}=\cov(\bd{x}_k,\bd{z}_k)$.
Note that $c^{(\ell)}=0$ for $r_c<\ell\le r_1\wedge r_2$.
For convenience, we set $\rho^{(\ell)}=0$ for $\ell>r_1\wedge r_2$.

From equation \eqref{tb: x=c+d}, 
we define $\{c^{(\ell)}\}_{\ell=1}^{r_c}$ as the CLFs of $\{\bd{x}_k\}_{k=1}^2$,
and
$\{d_k^{(\ell)}\}_{\ell=1}^{r_k}$ as the DLFs of~$\bd{x}_k$.
The common-source and distinctive-source random vectors, $\bd{c}_k$ and $\bd{d}_k$, of $\bd{x}_k$ are defined by 
\begin{align}
\bd{c}_k
&=\mb{B}_k^{[:,1:r_c]}[c^{(\ell)}]_{\ell=1}^{r_c}=
\mb{B}_k^{[:,1:r_c]}[c_{\Re}^{(\ell)} + c_{\Im}^{(\ell)}  ]_{\ell=1}^{r_c}
=:\bd{c}_{\Re,k}+\bd{c}_{\Im,k},\label{tb: two components for c_k}\\
\bd{d}_k&=\mb{B}_k[d_k^{(\ell)}]_{\ell=1}^{r_k}=\bd{x}_k-\bd{c}_k
=(\bd{x}_k-\bd{c}_{\Re,k})-\bd{c}_{\Im,k}
=:\bd{d}_{\Re,k}+\bd{d}_{\Im,k}.\label{tb: two components for d_k}
\end{align}
Let $z_{\Im}^{(\ell)}$ denote the auxiliary variable corresponding to $c^{(\ell)}$ in \eqref{c solution}.
We set $\{z_{\Im}^{(\ell)}\}_{\ell=1}^{r_c}\perp
\{z_k^{(\ell)}\}_{\ell\in[r_k],k\in[2]}$ and
$z_{\Im}^{(\ell_1)} \perp z_{\Im}^{(\ell_2)}$ for $\ell_1\ne \ell_2$.
Then by the bi-orthogonality of ACVs in \eqref{bi-orthogonality},
we obtain
the orthogonality of CLFs and DLFs:
$c^{(\ell_1)}\perp c^{(\ell_2)}$ and $d_{k}^{(\ell_1)}\perp d_{k}^{(\ell_2)}$ for $\ell_1\ne \ell_2$,
$d_1^{(\ell_1')}\perp d_2^{(\ell_2')}$ for $\ell_k'\in[r_k]$, and
$\{c^{(\ell)}\}_{\ell=1}^{r_c}\perp 
\{d_k^{(\ell)}\}_{\ell\in[r_k],k\in[2]}$.
In other words, we have the following desirable orthogonality:
\be\label{2-view prop}
\begin{cases}
\lspan(\bd{c}_1^\top)=\lspan(\bd{c}_2^\top)=\lspan(\{c^{(\ell)}\}_{\ell=1}^{r_c}),
\\
\lspan(\bd{d}_1^\top)\perp \lspan(\bd{d}_2^\top)~\text{with}~ \lspan(\bd{d}_k^\top)=\lspan(\{d_k^{(\ell)}\}_{\ell=1}^{r_k}),
\\
\lspan(\bd{c}_1^\top)
\perp\lspan([\bd{d}_1;\bd{d}_2]^\top).
\end{cases}
\ee
From Proposition~\ref{prop: uniqueness of c}, 
the covariance matrices of $\bd{c}_k$ and $\bd{d}_k$
can be computed by
\begin{align}
\cov(\bd{c}_k)&=\mb{B}_k^{[:,1:r_c]} \diag([\rho^{(\ell)}]_{\ell=1}^{r_c})
(\mb{B}_k^{[:,1:r_c]})^\top,\label{tb: cov(c_k)}\\
\cov(\bd{d}_k)&
=\mb{B}_k\diag([1-\rho^{(\ell)}]_{\ell=1}^{r_k})\mb{B}_k^\top=\cov(\bd{x}_k)-\cov(\bd{c}_k).
\label{tb: cov(d_k)}
\end{align}

\begin{theorem}[Uniqueness]\label{thm: Uniqueness of C for two blocks}
For $k\in\{1,2\}$,
$\cov(\bd{c}_k)$, $\cov(\bd{d}_k)$, $\bd{c}_{\Re,k}$ and $\bd{d}_{\Re,k}$ given in \eqref{tb: cov(c_k)}, \eqref{tb: cov(d_k)}, \eqref{tb: two components for c_k} and \eqref{tb: two components for d_k} are unique to 
$\bd{x}_k$, regardless of the non-uniqueness of 
$\{z_1^{(\ell)}\}_{\ell=1}^{r_1}\cup\{z_2^{(\ell)}\}_{\ell=1}^{r_2}\cup\{z_{\Im}^{(\ell)}\}_{\ell=1}^{r_c}$.
\end{theorem}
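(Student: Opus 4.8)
The plan is to reduce everything to the invariance of two objects, $\bd{c}_{\Re,k}$ and $\cov(\bd{c}_k)$, under the admissible changes in the ingredients $\{z_1^{(\ell)}\}\cup\{z_2^{(\ell)}\}\cup\{z_{\Im}^{(\ell)}\}$. Once these two are shown to depend only on $\bd{x}_k$, the remaining claims follow at once from the identities $\bd{d}_{\Re,k}=\bd{x}_k-\bd{c}_{\Re,k}$ in \eqref{tb: two components for d_k} and $\cov(\bd{d}_k)=\cov(\bd{x}_k)-\cov(\bd{c}_k)$ in \eqref{tb: cov(d_k)}, because $\bd{x}_k$ and $\cov(\bd{x}_k)$ are themselves fixed.

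First I would isolate which ingredients actually enter the two targets. By \eqref{tb: two components for c_k} and \eqref{c solution}, $\bd{c}_{\Re,k}=\mb{B}_k^{[:,1:r_c]}[c_{\Re}^{(\ell)}]_{\ell=1}^{r_c}$ with $c_{\Re}^{(\ell)}=\frac{\rho^{(\ell)}}{1+\rho^{(\ell)}}(z_1^{(\ell)}+z_2^{(\ell)})$, and by \eqref{tb: cov(c_k)} $\cov(\bd{c}_k)=\sum_{\ell=1}^{r_c}\rho^{(\ell)}\mb{B}_k^{[:,\ell]}(\mb{B}_k^{[:,\ell]})^\top$ with $\mb{B}_k^{[:,\ell]}=\cov(\bd{x}_k,z_k^{(\ell)})$. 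Both expressions involve only the first $r_c$ canonical variables $\{z_1^{(\ell)},z_2^{(\ell)}\}_{\ell=1}^{r_c}$ and the canonical correlations $\{\rho^{(\ell)}\}_{\ell=1}^{r_c}$; the augmenting variables ($\ell>r_c$) and, crucially, the auxiliary variables $z_{\Im}^{(\ell)}$ do not appear. (That $\cov(\bd{c}_k)$ is $z_{\Im}$-free is exactly because the imaginary-part variance $\var(c_{\Im}^{(\ell)})$ combines with $\var(c_{\Re}^{(\ell)})$ to give precisely $\rho^{(\ell)}$, which underlies \eqref{tb: cov(c_k)}.) This disposes of the non-uniqueness coming from the arbitrary orthonormal completion and from $z_{\Im}$ in one stroke.

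It then remains to control the non-uniqueness of the canonical variables themselves. The canonical correlations $\rho^{(\ell)}$ solve the variational problem \eqref{CCA def} and so are uniquely determined by $(\bd{x}_1,\bd{x}_2)$. For the canonical variables, standard CCA theory gives that, for each maximal index block $J\subseteq\{1,\dots,r_c\}$ on which $\rho^{(\ell)}$ takes a common value $\rho_J>0$, any two admissible choices are related by $[z_k^{(\ell)}]_{\ell\in J}\mapsto\mb{Q}_J^\top[z_k^{(\ell)}]_{\ell\in J}$ for an orthogonal $\mb{Q}_J$ that is \emph{the same} for $k=1$ and $k=2$; the common rotation is forced by the bi-orthogonality \eqref{bi-orthogonality}, which pins $\cov(\bd{z}_1,\bd{z}_2)$ to $\rho_J\mb{I}$ on $J\times J$ and hence requires $\mb{Q}_1^\top\mb{Q}_2=\mb{I}$. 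Under this change the loadings transform covariantly, $\mb{B}_k^{[:,J]}=\cov(\bd{x}_k,[z_k^{(\ell)}]_{\ell\in J})\mapsto\mb{B}_k^{[:,J]}\mb{Q}_J$.

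The final step is the cancellation. Because $\rho^{(\ell)}\equiv\rho_J$ is constant on each block, the weights $\diag(\rho^{(\ell)})$ and $\diag(\frac{\rho^{(\ell)}}{1+\rho^{(\ell)}})$ restricted to $J$ are scalar multiples of the identity and therefore commute with $\mb{Q}_J$. Writing the targets blockwise, the $J$-block of $\cov(\bd{c}_k)$ becomes $\rho_J\,\mb{B}_k^{[:,J]}\mb{Q}_J\mb{Q}_J^\top(\mb{B}_k^{[:,J]})^\top=\rho_J\,\mb{B}_k^{[:,J]}(\mb{B}_k^{[:,J]})^\top$, and the $J$-block contribution to $\bd{c}_{\Re,k}$ becomes $\frac{\rho_J}{1+\rho_J}\mb{B}_k^{[:,J]}\mb{Q}_J\mb{Q}_J^\top\big([z_1^{(\ell)}]_{\ell\in J}+[z_2^{(\ell)}]_{\ell\in J}\big)$, each manifestly unchanged; summing over blocks yields invariance of $\cov(\bd{c}_k)$ and $\bd{c}_{\Re,k}$. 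I expect the main obstacle to be the third step: verifying that the only admissible reparametrization of the canonical variables on a repeated-correlation block is a single orthogonal rotation applied simultaneously to both views (for distinct $\rho^{(\ell)}$ this degenerates to the easy joint sign flip $z_k^{(\ell)}\mapsto -z_k^{(\ell)}$). The constancy of $\rho^{(\ell)}$ on each block is what makes the rotation commute with the weights and wash out, so it is doing the real work.
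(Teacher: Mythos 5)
Your proposal is correct and follows essentially the same route as the paper: both reduce the problem to $\cov(\bd{c}_k)$ and $\bd{c}_{\Re,k}$, invoke the fact that any two admissible choices of the first $r_c$ canonical variable pairs differ by a block-diagonal orthogonal matrix (blocks indexed by repeated canonical correlations) applied simultaneously to both views, and then cancel that rotation against the blockwise-constant diagonal weights, with $\cov(\bd{d}_k)=\cov(\bd{x}_k)-\cov(\bd{c}_k)$ and $\bd{d}_{\Re,k}=\bd{x}_k-\bd{c}_{\Re,k}$ handling the remaining two objects. The only difference is presentational: the paper cites Theorem~2 of \citet{Shu18} for the rotation structure and for the $\bd{c}_{\Re,k}$ case, whereas you state the former as standard CCA theory (correctly flagging it as the step needing verification) and write out the latter explicitly.
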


\begin{remark}	
	Although the non-uniqueness of auxiliary variables $\{z_{\Im}^{(\ell)}\}_{\ell=1}^{r_c}$
	causes the non-identifiability issue of the CLF and DLF spaces
	$\{\lspan(\bd{c}_k^\top),\lspan(\bd{d}_k^\top)\}_{k=1}^2$,
	the covariance of $\bd{x}_k$ explained by the CLFs and DLFs, i.e.,
$\cov(\bd{c}_k)$ and $\cov(\bd{d}_k)$, are invariant as shown in Theorem~\ref{thm: Uniqueness of C for two blocks}. 
Moreover, 
	 to build a predictive model
	 for a real-valued outcome random vector $\bd{y}$  using
	 $E(\bd{y}|\{c^{(\ell)}\}_{\ell=1}^{r_c},\{d_k^{(\ell)}\}_{\ell\in [r_k],k\in[2]},V,\{z_{\Im}^{(\ell)}\}_{\ell=1}^{r_c})$,
	where $V$ is a set of real random variables (which can be  empty),
	 if 
	 we choose
$[z_{\Im}^{(\ell)}]_{\ell=1}^{r_c}$
to be 
independent of
	$[\bd{y};[c_{\Re}^{(\ell)}]_{\ell=1}^{r_c};
	[[d_{\Re,k}^{(\ell)}]_{\ell\in [r_k]}]_{k\in[2]};
	[V]]$,
	then 
$E(\bd{y}|\{c^{(\ell)}\}_{\ell=1}^{r_c},\{d_k^{(\ell)}\}_{\ell\in [r_k],k\in[2]},V,\{z_{\Im}^{(\ell)}\}_{\ell=1}^{r_c})
=E(\bd{y}|\{c_{\Re}^{(\ell)}\}_{\ell=1}^{r_c},\{d_{\Re,k}^{(\ell)}\}_{\ell\in [r_k],k\in[2]},V)$,  
	 and thus 
	 this predictive model is invariant to the non-uniqueness of $\{z_{\Im}^{(\ell)}\}_{\ell=1}^{r_c}$.

\end{remark}

\begin{remark}
Our D-CDLF only differs from D-CCA \citep{Shu18} in the CLFs
$\{c^{(\ell)}\}_{\ell=1}^{r_c}$.
D-CCA defines its $\ell$-th CLF as
$c^{(\ell)}=(z_1^{(\ell)}+z_2^{(\ell)})/\big(2-2\sqrt{(1-\rho^{(\ell)})/(1+\rho^{(\ell)})}\big)$,
which satisfies $\big(1+\sqrt{1-(\rho^{(\ell)})^2}\big)\var(c^{(\ell)})=(\rho^{(\ell)})^2\le \rho^{(\ell)}$.
Since 
D-CDLF has $\var(c^{(\ell)})=\rho^{(\ell)}$
and
both methods have 
$
\var(\bd{c}_k^{[i]})=\sum_{\ell=1}^{r_c}(\mb{B}_k^{[i,\ell]})^2\var(c^{(\ell)})
$, it concludes that
the variance
$\var(\bd{c}_k^{[i]})$ of D-CCA
is no larger than 
that of D-CDLF.
\end{remark}

\subsection{Variable-level and view-level  PVEs}
To measure the joint effect of CLFs  or DLFs on original variables, 
we propose the variable-level PVEs and the view-level
PVEs.

The variable-level PVEs for a denoised original variable $\bd{x}_k^{[i]}$
by CLFs
$\{c^{(\ell)}\}_{\ell=1}^{r_c}$  
and its DLFs
$\{d_k^{(\ell)}\}_{\ell=1}^{r_k}$ 
are, respectively,
defined as
\vspace{-0.08cm}
$$
\PVE_c(\bd{x}_k^{[i]}):=
\frac{\var(\bd{c}_k^{[i]})}{\var(\bd{x}_k^{[i]})}
=\sum_{\ell=1}^{r_c} \corr^2(\bd{x}_k^{[i]},c^{(\ell)}),
$$
and
$$
\PVE_d(\bd{x}_k^{[i]})
:=
\frac{\var(\bd{d}_k^{[i]})}{\var(\bd{x}_k^{[i]})}
=\sum_{\ell=1}^{r_k} \corr^2(\bd{x}_k^{[i]},d_k^{(\ell)}),
$$
which are equal to the sums of their squared correlations.
The variable-level PVEs are useful in selecting original  variables within each data view that are highly affected by CLFs and DLFs, respectively.

The view-level PVEs 
for the entire $\bd{x}_k$ by CLFs
$\{c^{(\ell)}\}_{\ell=1}^{r_c}$  
and its DLFs
$\{d_k^{(\ell)}\}_{\ell=1}^{r_k}$ 
are, respectively, defined~as
$$
\PVE_c(\bd{x}_k)
:=\frac{\sum_{i=1}^{p_k}\var(\bd{c}_k^{[i]})}{\sum_{i=1}^{p_k}\var(\bd{x}_k^{[i]})}
=\sum_{i=1}^{p_k}\gamma_{ki}\PVE_c(\bd{x}_k^{[i]})
$$
and
$$
\PVE_d(\bd{x}_k)
:=\frac{\sum_{i=1}^{p_k}\var(\bd{d}_k^{[i]})}{\sum_{i=1}^{p_k}\var(\bd{x}_k^{[i]})}
=\sum_{i=1}^{p_k}\gamma_{ki}\PVE_d(\bd{x}_k^{[i]}),
$$
which are the weighted averages of 
corresponding variable-level PVEs with weights
$\gamma_{ki}=\var(\bd{x}_k^{[i]})/\sum_{j=1}^{p_k}\var(\bd{x}_k^{[j]})$.

Due to the uncorrelatedness 
between CLFs and DLFs, the two types of PVEs follow {\it the rule of sum}:
$$
\PVE_c(\bd{x}_k^{[i]})+\PVE_d(\bd{x}_k^{[i]})=1
\qquad\text{and}\qquad
\PVE_c(\bd{x}_k)+\PVE_d(\bd{x}_k)=1.
$$

\section{Estimation}\label{sec: estimation}
\label{section: est for K>=2}
 Suppose that the high-dimensional low-rank plus noise structure in \eqref{eqn: y=c+d+e}
 follows the factor model \citep{Shu2022d}:
\be\label{factor model}
\mb{Y}_k=\mb{X}_k+\mb{E}_k=\mb{B}_{k,f}\mb{F}_k+\mb{E}_k,
\qquad
\bd{y}_k=\bd{x}_k+\bd{e}_k=\mb{B}_{k,f}\bd{f}_k+\bd{e}_k,
\qquad k=1,2,
\ee
where $\mb{B}_{k,f}\in \mathbb{R}^{p_k\times r_k}$ is a deterministic matrix, the columns of $\mb{Y}_k$, $\mb{X}_k$, $\mb{F}_k$ and $\mb{E}_k$ are  the $n$ i.i.d. copies of $\bd{y}_k$, $\bd{x}_k$, $\bd{f}_k$ and $\bd{e}_k$, respectively,
$\bd{f}_k^\top$ is an orthonormal 
basis of $\lspan(\bd{x}_k^\top)$ with $\cov(\bd{f}_k,\bd{e}_k)=\mb{0}_{r_k\times p_k}$,
$\lspan(\bd{x}_k^\top)$ is a fixed space that is independent of $\{p_k\}_{k=1}^2$ and $n$, {\color{black}and $\mb{F}:=[\mb{F}_1;\mb{F}_2]$ has independent columns}.
We assume
that $\cov(\bd{y}_k)$ is a spiked covariance matrix,  for which
the largest $r_k$ eigenvalues are significantly larger than the rest, i.e., signals are distinguishably stronger than noises.
The $r_k$ spiked eigenvalues are majorly contributed by signal $\bd{x}_k$, 
whereas the rest small eigenvalues are induced by noise $\bd{e}_k$.
The spiked covariance model has been widely used in various fields, such as signal processing \citep{Nada10}, machine learning \citep{Huang17}, and economics \citep{Cham83}.

For simplicity, we define D-CDLF estimators 
using true $\{r_k\}_{k=1}^2$ and $r_c$, which can be estimated by
the edge distribution (ED) method of \citet{Onat10} and the minimum description length information-theoretic criterion (MDL-IC) of 
\citet{Song16}, respectively. See Section 2.3 in \citet{Shu18} for details.

The estimator of $\mb{X}_k$ is defined by using 
the soft-thresholding method of \citet{Shu18} as
\be\label{X tilde}
\widehat{\mb{X}}_k=\mb{U}_{k1} \diag([\widehat{\sigma}_\ell^S(\mb{Y}_k)]_{\ell=1}^{r_k})\mb{U}_{k2}^\top,
\ee
where
$\mb{U}_{k1}\diag([\sigma_\ell(\mb{Y}_k)]_{\ell=1}^{r_k})\mb{U}_{k2}^\top$ is the top-$r_k$ SVD of $\mb{Y}_k$,
and the soft-thresholded singular value
$
\widehat{\sigma}_\ell^S(\mb{Y}_k)=\sqrt{\max\{\sigma_\ell^2(\mb{Y}_k)- \tau_kp_k,0\}}
$ with
$
\tau_k=\sum_{\ell=r_k+1}^{p_k} \sigma_\ell^2(\mb{Y}_k)/(np_k-nr_k-p_kr_k).
$ 
Define the estimator of $\mb{\cov}(\bd{x}_k)$ by
\be\label{est cov(x_k)}
\widehat{\mb{\cov}}(\bd{x}_k)=n^{-1}\widehat{\mb{X}}_k\widehat{\mb{X}}_k^\top,
\ee
and denote its top-$r_k$ SVD by 
$\widehat{\mb{\Sigma}}_k=\widehat{\mb{V}}_k\widehat{\mb{\Lambda}}_k\widehat{\mb{V}}_k^\top$,
where $\widehat{\mb{\Lambda}}_k=\diag([\sigma_\ell(\widehat{\cov}(\bd{x}_k)]_{\ell=1}^{r_k})$.
Let $\widehat{\mb{H}}_k=(\widehat{\mb{\Lambda}}_k^\dag)^{1/2}\widehat{\mb{V}}_k^\top \widehat{\mb{X}}_k$, which is the estimated sample matrix of $\bd{h}_k =\mb{\Lambda}_k^{-1/2}\mb{V}_k^\top \bd{x}_k$.
Define the estimator of $\mb{\Theta}=\cov(\bd{h}_1,\bd{h}_2)$
by $\widehat{\mb{\Theta}}=n^{-1}\widehat{\mb{H}}_1\widehat{\mb{H}}_2^\top$.
Write
the full SVD of $\widehat{\mb{\Theta}}$ by
$
\widehat{\mb{\Theta}}=\widehat{\mb{U}}_{\theta, 1}\widehat{\mb{\Lambda}}_\theta \widehat{\mb{U}}_{\theta, 2}^\top
$.
The sample matrix of $\bd{z}_k$, the vector consisting of $\bd{x}_k$'s ACVs, is estimated by 
$\widehat{\mb{Z}}_k=\widehat{\mb{U}}_{\theta, k}^\top\widehat{\mb{H}}_k$.
We define the estimators of the canonical correlation
$\rho^{(\ell)}=\sigma_\ell(\mb{\Theta})$
and the coefficient matrix $\mb{B}_k=\cov(\bd{x}_k,\bd{z}_k)=\mb{V}_k\mb{\Lambda}_k^{1/2}\mb{U}_{\theta, k}$, respectively,
by 
\be\label{rho_hat and B_k_hat}
\widehat{\rho}^{(\ell)}=\sigma_\ell(\widehat{\mb{\Theta}})\quad\text{and}\quad
\widehat{\mb{B}}_k=n^{-1}\widehat{\mb{X}}_k\widehat{\mb{Z}}_k^\top=\widehat{\mb{V}}_k\widehat{\mb{\Lambda}}_k^{1/2}\widehat{\mb{U}}_{\theta, k}.
\ee

Then from the expressions of $\cov(\bd{c}_k)$ and $\cov(\bd{d}_k)$ given in \eqref{tb: cov(c_k)} and \eqref{tb: cov(d_k)}, their estimators are defined as
\begin{align}
\widehat{\cov}(\bd{c}_k)&=\widehat{\mb{B}}_k^{[:,1:r_c]} \diag([\widehat{\rho}_\ell]_{\ell=1}^{r_c})
(\widehat{\mb{B}}_k^{[:,1:r_c]})^\top,\label{est cov(c_k)}\\
\widehat{\cov}(\bd{d}_k)&
=\widehat{\cov}(\bd{x}_k)-\widehat{\cov}(\bd{c}_k).
\label{est cov(d_k)}
\end{align}
The proportions of signal variance explained by
CLFs and DLFs are estimated by
\begin{align*}
\widehat{\PVE}_c(\bd{x}_k)&=\frac{\tr(\widehat{\cov}(\bd{c}_k))}{\tr(\widehat{\cov}(\bd{x}_k))}=1-\frac{\tr(\widehat{\cov}(\bd{d}_k))}{\tr(\widehat{\cov}(\bd{x}_k))}=1-\widehat{\PVE}_d(\bd{x}_k),
\\
\widehat{\PVE}_c(\bd{x}_k^{[i]})&=\frac{\widehat{\var}(\bd{c}_k^{[i]})}{\widehat{\var}(\bd{x}_k^{[i]})}
=1-\frac{\widehat{\var}(\bd{d}_k^{[i]})}{\widehat{\var}(\bd{x}_k^{[i]})}
=1-\widehat{\PVE}_d(\bd{x}_k^{[i]}),
\end{align*}
where $\widehat{\var}(\bd{v}_k^{[i]})=\widehat{\cov}(\bd{v}_k)^{[i,i]}$ for $\bd{v}_k\in \{\bd{x}_k,\bd{c}_k,\bd{d}_k\}$.


To estimate the common-source and distinctive-source matrices
$\{\mb{C}_k,\mb{D}_k\}$, i.e., the sample matrices
of $\{\bd{c}_k,\bd{d}_k\}$,
we first need to find a centered isotropic random vector $\bd{z}_{\Im}
=[z_{\Im}^{(\ell)}]_{\ell=1}^{r_c}$ 
such that 
$\lspan(\bd{z}_{\Im}^\top)\perp\lspan([\bd{x}_k]_{k\in[K]}^\top)$, and its sample matrix $\mb{Z}_{\Im}$.
We generate $\{z_{\Im}^{(\ell)}\}_{\ell=1}^{r_c}$ as i.i.d. standard Gaussian random variables 
independent of $\{\bd{x}_k\}_{k=1}^2$ by using a Gaussian random number generator. 
Following the definitions of $\{\bd{c}_k,\bd{d}_k\}$ in
 \eqref{tb: two components for c_k} and \eqref{tb: two components for d_k}, we define the estimators of their sample matrices corresponding to $\{\mb{X}_1,\mb{X}_2\}$ by
  \be\label{C_k for two block}
\widehat{\mb{C}}_k= 
\widehat{\mb{B}}_k^{[:,1:r_c]}
[ \widehat{\bd{c}}^{(\ell)}]_{\ell=1}^{r_c}
\quad\text{and}\quad
\widehat{\mb{D}}_k=
\widehat{\mb{B}}_k[\widehat{\bd{d}}_k^{(\ell)}]_{\ell=1}^{r_k}=
\widehat{\mb{X}}_k-\widehat{\mb{C}}_k,
\ee
where  $\widehat{\bd{c}}^{(\ell)}$, $\widehat{\bd{d}}_k^{(\ell)}\in\mathbb{R}^{1\times n}$
are estimated samples of $c^{(\ell)}$ and $d_k^{(\ell)}$ defined by
 \be\label{samples of c(ell) and d(ell)}
 \widehat{\bd{c}}^{(\ell)}=\frac{\widehat{\rho}^{(\ell)}}{1+\widehat{\rho}^{(\ell)}}(\widehat{\mb{Z}}_1^{[\ell,:]}+\widehat{\mb{Z}}_2^{[\ell,:]})+ \mb{Z}_{\Im}^{[\ell,:]}\sqrt{\frac{\widehat{\rho}^{(\ell)}(1-\widehat{\rho}^{(\ell)})}{1+\widehat{\rho}^{(\ell)}}}
\quad\text{and}\quad
 \widehat{\bd{d}}_k^{(\ell)}=\widehat{\mb{Z}}_k^{[\ell,:]}- \widehat{\bd{c}}^{(\ell)}
 \ee
 for $\ell\le r_c$,
 and $\widehat{\bd{d}}_k^{(\ell)}=\widehat{\mb{Z}}_k^{[\ell,:]}$ 
for $\ell> r_c$ due to $\rho^{(\ell)}=0$,
 and $\widehat{\mb{Z}}_k$ and $\{\widehat{\rho}^{(\ell)}, \widehat{\mb{B}}_k\}$
are given above and in \eqref{rho_hat and B_k_hat}, respectively.

\section{Theoretical Proofs}\label{sec: proofs}
\begin{proof}[Proof of Proposition~\ref{prop: uniqueness of c}]
	Let $z_{\Im}\perp \{z_1,z_2\}$,  and    $w=a_1 z_1+a_2z_2+a_{\Im}z_{\Im}$ with $\var(w)=a_1^2+a_2^2+2a_1a_2\rho+a_{\Im}^2=1$. 
	Let $c=\alpha w$. We have
	\begin{align}
	\cov(c,d_1)=\cov(c,z_1-c)&=\cov(c,z_1)-\alpha^2\nonumber\\
	&=\alpha(a_1+a_2\rho)-\alpha^2=0\label{cov(c,d_1)=0}
	\end{align}
	and
	\begin{align}
	\cov(c,d_2)=\cov(c,z_2-c)&=\cov(c,z_2)-\alpha^2\nonumber\\
	&=\alpha(a_2+a_1\rho)-\alpha^2=0.\label{cov(c,d_2)=0}
	\end{align}
	So, $\sum_{k=1}^2\cov(z_k,c)=2\alpha^2$.
	Consequently,
	\[
	\cov(d_1,d_2)=\cov(z_1-c,z_2-c)=\cov(z_1,z_2)-\sum_{k=1}^2\cov(z_k,c)+\alpha^2
	=\rho-\alpha^2=0
	\]
	yields 
	\be\label{alpha value}
	\alpha=\pm\sqrt{\rho}.
	\ee
	Also from \eqref{cov(c,d_1)=0} and \eqref{cov(c,d_2)=0}, we obtain
	\be
	\alpha(1+\rho)(a_1+a_2)=2\alpha^2,
	\ee
	\be
	\alpha(a_1+a_2\rho)=\alpha^2,
	\ee
	and
	\be\label{alpha(a2+a1rho)}
	\alpha(a_2+a_1\rho)=\alpha^2.
	\ee
	
	When $\rho=0$, then by \eqref{alpha value} we have $\alpha=0$ and thus $c=\alpha w=0$.

	We next consider $\rho\in (0,1]$.
	Then from equations \eqref{alpha value}--\eqref{alpha(a2+a1rho)}, we
	have
	\be\label{a_1+a_2 value}
	a_1+a_2=2\alpha/(1+\rho)
	\ee
	and
	\[
	a_1+a_2\rho=a_2+a_1\rho=\alpha.
	\]
	Hence, $1-a_{\Im}^2=a_1^2+2a_1a_2\rho+a_2^2=\alpha(a_1+a_2)=2\rho/(1+\rho)$.
	So, 
	\be\label{a_{Im} value}
	a_{\Im}=\pm\sqrt{\frac{1-\rho}{1+\rho}}
	\ \text{for}\ \rho\in (0,1].
	\ee
	Moreover, 
	$(a_1+a_2)^2=4\rho/(1+\rho)^2=1-2a_1a_2\rho-a_{\Im}^2+2a_1a_2$,
	and thus $2a_1a_2(1-\rho)=4\rho/(1+\rho)^2-1+a_{\Im}^2=4\rho/(1+\rho)^2-2\rho/(1+\rho)=2\rho(1-\rho)/(1+\rho)^2$.
	Further let $\rho\in(0,1)$,
	then $a_1a_2=\rho/(1+\rho)^2$.
	Consequently, $(a_1-a_2)^2=(a_1+a_2)^2-4a_1a_2=4\rho/(1+\rho)^2-4\rho/(1+\rho)^2=0$. Then from \eqref{a_1+a_2 value}, $a_1=a_2=\alpha/(1+\rho).$
	We hence obtain
	\be\label{app: c solution}
	c=\alpha w=\alpha (a_1z_1+a_2z_2+a_{\Im}z_{\Im})
	=\frac{\rho}{1+\rho}(z_1+z_2)\pm z_{\Im}\sqrt{\frac{\rho(1-\rho)}{1+\rho}}
	\ee
	for $\rho\in (0,1)$.
	
	When $\rho=1$, we have $z_1=z_2$, and moreover,
	by \eqref{a_{Im} value} we have $a_{\Im}=0$.
	Then by \eqref{a_1+a_2 value} and \eqref{alpha value}, $c=\alpha w=\alpha(a_1z_1+a_2z_2+a_{\Im}z_{\Im})=\alpha(a_1+a_2)z_1=2\alpha^2z_1/(1+\rho)=2\rho z_1/(1+\rho)=(z_1+z_2)\rho/(1+\rho)$.
	The equation $c=(z_1+z_2)\rho/(1+\rho)$ satisfies \eqref{app: c solution} for $\rho=1$.
	
	From the above, for $\rho\in[0,1]$ we have 
	\[
	c
	=\frac{\rho}{1+\rho}(z_1+z_2)\pm z_{\Im}\sqrt{\frac{\rho(1-\rho)}{1+\rho}}
	\]
	and 
	$\var(c)=\alpha^2=\rho$.
	By $c\perp d_k$ for $k=1,2$, we have $\var(d_k)=\var(z_k)-\var(c)=1-\rho$.
\end{proof}

\begin{proof}[Proof of Theorem~\ref{thm: Uniqueness of C for two blocks}]
	Let $\{\widetilde{\bd{z}}_k^{[1:r_c]}\}_{k=1}^2$ be an arbitrary set of the first $r_c$ pairs of canonical variables of $\{\bd{x}_k\}_{k=1}^2$.
	From the proof of Theorem 2 in \citet{Shu18}, we have
	that there exists a matrix $\mb{Q}_{\theta}=\diag(\{\mb{Q}_{\theta\ell}\}_{\ell=1}^{r_c^*})$
	such that $\widetilde{\bd{z}}_k^{[1:r_c]}=\mb{Q}_\theta \bd{z}_k^{[1:r_c]}$,
	where $\mb{Q}_{\theta \ell}$ is an orthogonal matrix with column dimension equal to the repetition number of the $\ell$-th largest distinct value in $\rho_1,\dots, \rho_{r_c}$.
Then,
\begin{align}
\lefteqn{\cov(\bd{x}_k,\widetilde{\bd{z}}_k^{[1:r_c]}) \diag(\rho_1,\dots, \rho_{r_c})
	\cov(\widetilde{\bd{z}}_k^{[1:r_c]},\bd{x}_k)}
\label{prf tb: uniq of cov(c_k) part1}\\
	&=\cov(\bd{x}_k,\bd{z}_k^{[1:r_c]}) \mb{Q}_\theta^\top\diag(\rho_1,\dots, \rho_{r_c})
\mb{Q}_\theta	\cov(\bd{z}_k^{[1:r_c]},\bd{x}_k)
\nonumber\\
&= \cov(\bd{x}_k,\bd{z}_k^{[1:r_c]}) \diag(\rho_1,\dots, \rho_{r_c})
\cov(\bd{z}_k^{[1:r_c]},\bd{x}_k)
\label{prf tb: uniq of cov(c_k) part2}\\
&= \cov(\bd{x}_k,\bd{z}_k^{[1:r_c]}) \cov((c_1,\dots, c_{r_c})^\top)
\cov(\bd{z}_k^{[1:r_c]},\bd{x}_k)
\nonumber\\
&=\cov(\cov(\bd{x}_k,\bd{z}_k^{[1:r_c]})(c_1,\dots, c_{r_c})^\top)
\nonumber\\
&=\cov(\bd{c}_k).\nonumber
\end{align}	
By \eqref{prf tb: uniq of cov(c_k) part1} and \eqref{prf tb: uniq of cov(c_k) part2}, 
$
\cov(\bd{c}_k)=\cov(\bd{x}_k,\bd{z}_k^{[1:r_c]}) \diag(\rho_1,\dots, \rho_{r_c})
\cov(\bd{z}_k^{[1:r_c]},\bd{x}_k)
$
is unique.

Since $\lspan(\bd{c}_k^\top)\perp \lspan(\bd{d}_k^\top)$, that is, $\cov(\bd{c}_k,\bd{d}_k)=\mb{0}_{p_k\times p_k}$,
we have that $\cov(\bd{x}_k)=\cov(\bd{c}_k+\bd{d}_k)=\cov(\bd{c}_k)+\cov(\bd{d}_k)$.
Thus, $
\cov(\bd{d}_k)=\cov(\bd{x}_k)-\cov(\bd{c}_k)
$ is also unique,
and $\cov(\bd{d}_k)=\mb{B}_k\mb{B}_k^\top-
\mb{B}_k  \diag([\rho_\ell]_{\ell=1}^{r_k})    \mb{B}_k^\top
=\mb{B}_k  \diag([1-\rho_\ell]_{\ell=1}^{r_k})    \mb{B}_k^\top$.

Following the same proof technique used for Theorem~2 in \citet{Shu18}, we obtain the uniqueness of $\bd{c}_{\Re,k}$ and $\bd{d}_{\Re,k}$.
\end{proof}

\bibliographystyle{asa}
\bibliography{refs}

\begin{thebibliography}{17}
\newcommand{\enquote}[1]{``#1''}
\expandafter\ifx\csname natexlab\endcsname\relax\def\natexlab#1{#1}\fi

\bibitem[{Chamberlain and Rothschild(1983)}]{Cham83}
Chamberlain, G. and Rothschild, M. (1983), \enquote{Arbitrage, factor
  structure, and mean-variance analysis on large asset markets,}
  \textit{Econometrica}, 51, 1281--1304.

\bibitem[{Feng et~al.(2018)Feng, Jiang, Hannig, and Marron}]{Feng18}
Feng, Q., Jiang, M., Hannig, J., and Marron, J. (2018), \enquote{Angle-based
  joint and individual variation explained,} \textit{Journal of Multivariate
  Analysis}, 166, 241--265.

\bibitem[{Gaynanova and Li(2019)}]{gaynanova2019structural}
Gaynanova, I. and Li, G. (2019), \enquote{Structural learning and integrative
  decomposition of multi-view data,} \textit{Biometrics}, 75, 1121--1132.

\bibitem[{Hotelling(1936)}]{Hote36}
Hotelling, H. (1936), \enquote{Relations between two sets of variates,}
  \textit{Biometrika}, 28, 321--377.

\bibitem[{Huang(2017)}]{Huang17}
Huang, H. (2017), \enquote{Asymptotic behavior of support vector machine for
  spiked population model,} \textit{Journal of Machine Learning Research}, 18,
  1--21.

\bibitem[{Lock et~al.(2013)Lock, Hoadley, Marron, and Nobel}]{Lock13}
Lock, E.~F., Hoadley, K.~A., Marron, J.~S., and Nobel, A.~B. (2013),
  \enquote{Joint and individual variation explained (JIVE) for integrated
  analysis of multiple data types,} \textit{Annals of Applied Statistics}, 7,
  523--542.

\bibitem[{L{\"o}fstedt and Trygg(2011)}]{Lofs11}
L{\"o}fstedt, T. and Trygg, J. (2011), \enquote{{OnPLS}--a novel multiblock
  method for the modelling of predictive and orthogonal variation,}
  \textit{Journal of Chemometrics}, 25, 441--455.

\bibitem[{Nadakuditi and Silverstein(2010)}]{Nada10}
Nadakuditi, R.~R. and Silverstein, J.~W. (2010), \enquote{Fundamental limit of
  sample generalized eigenvalue based detection of signals in noise using
  relatively few signal-bearing and noise-only samples,} \textit{IEEE Journal
  of Selected Topics in Signal Processing}, 4, 468--480.

\bibitem[{O'Connell and Lock(2016)}]{OCon16}
O'Connell, M.~J. and Lock, E.~F. (2016), \enquote{{R.JIVE} for exploration of
  multi-source molecular data,} \textit{Bioinformatics}, 32, 2877--2879.

\bibitem[{Onatski(2010)}]{Onat10}
Onatski, A. (2010), \enquote{Determining the number of factors from empirical
  distribution of eigenvalues,} \textit{The Review of Economics and
  Statistics}, 92, 1004--1016.

\bibitem[{Schouteden et~al.(2013)Schouteden, Van~Deun, Pattyn, and
  Van~Mechelen}]{schouteden2013sca}
Schouteden, M., Van~Deun, K., Pattyn, S., and Van~Mechelen, I. (2013),
  \enquote{SCA with rotation to distinguish common and distinctive information
  in linked data,} \textit{Behavior research methods}, 45, 822--833.

\bibitem[{Shiryaev(1996)}]{Shiryaev1996}
Shiryaev, A.~N. (1996), \textit{Probability}, vol.~95 of \textit{Graduate Texts
  in Mathematics}, Springer-Verlag, New York, 2nd ed., translated from the
  first (1980) Russian edition by R. P. Boas.

\bibitem[{Shu et~al.(2022)Shu, Qu, and Zhu}]{Shu2022d}
Shu, H., Qu, Z., and Zhu, H. (2022), \enquote{D-GCCA: Decomposition-based
  Generalized Canonical Correlation Analysis for Multi-view High-dimensional
  Data,} \textit{Journal of Machine Learning Research}, 23, 1--64.

\bibitem[{Shu et~al.(2020)Shu, Wang, and Zhu}]{Shu18}
Shu, H., Wang, X., and Zhu, H. (2020), \enquote{{D-CCA}: A decomposition-based
  canonical correlation analysis for high-dimensional datasets,}
  \textit{Journal of the American Statistical Association}, 115, 292--306.

\bibitem[{Song et~al.(2016)Song, Schreier, Ram{\'\i}rez, and Hasija}]{Song16}
Song, Y., Schreier, P.~J., Ram{\'\i}rez, D., and Hasija, T. (2016),
  \enquote{Canonical correlation analysis of high-dimensional data with very
  small sample support,} \textit{Signal Processing}, 128, 449--458.

\bibitem[{Yin et~al.(1988)Yin, Bai, and Krishnaiah}]{Yin88}
Yin, Y.-Q., Bai, Z.-D., and Krishnaiah, P.~R. (1988), \enquote{On the limit of
  the largest eigenvalue of the large dimensional sample covariance matrix,}
  \textit{Probability Theory and Related Fields}, 78, 509--521.

\bibitem[{Zhou et~al.(2016)Zhou, Cichocki, Zhang, and Mandic}]{Zhou16}
Zhou, G., Cichocki, A., Zhang, Y., and Mandic, D.~P. (2016), \enquote{Group
  component analysis for multiblock data: Common and individual feature
  extraction,} \textit{IEEE Transactions on Neural Networks and Learning
  Systems}, 27, 2426--2439.

\end{thebibliography}

\end{document}